\newacro{CSP}{Constraint satisfaction problems}
\newacro{SAT}{Boolean satisfiability}
\newacro{SA}{simulated annealing}
\newacro{SB}{simulated bifurcation}
\newacro{FE}{Fourier expansion}
\newacro{PLE}{parity learning with error}
\newacro{IQR}{interquartile range}
\newacro{DOPO}{degenerate optical parametric oscillator}
\newacro{OIM}{oscillator-based Ising machine}
\newtheorem{lemma}{Lemma}
\newtheorem{theorem}{Theorem}
\newtheorem{corollary}{Corollary}
\newtheorem{proposition}{Proposition}
\newtheorem{remark}{Remark}
\newtheorem{example}{Example}
\newtheorem{definition}{Definition}
\def\BibTeX{{\rm B\kern-.05em{\sc i\kern-.025em b}\kern-.08em
    T\kern-.1667em\lower.7ex\hbox{E}\kern-.125emX}}
\begin{document}

\title{
    Analysis of Higher-Order Ising Hamiltonians

\thanks{
    We thank Moshe Vardi for the helpful and insightful comments and discussions that contributed to this work. 
    This work was supported by the National Research Foundation, Prime Minister’s Office, Singapore, under its Competitive Research Program (NRF-CRP24-2020-0002 and NRF-CRP24-2020-0003), the Ministry of Education (Singapore) Tier 2 Academic Research Fund (MOE-T2EP50220-0012 and MOE-T2EP50221-0008).
    Zhiwei Zhang is supported in part by NSF grants (IIS-1527668, CCF1704883, IIS1830549), DoD MURI grant (N00014-20-1-2787), Andrew Ladd Graduate Fellowship of Rice Ken Kennedy Institute, and an award from the Maryland Procurement Office.
    Corresponding Author: Xuanyao Fong (email: kelvin.xy.fong@nus.edu.sg).
}

}

\author{\IEEEauthorblockN{}
}

\author{\IEEEauthorblockN{Yunuo Cen\IEEEauthorrefmark{1},
                          Zhiwei Zhang\IEEEauthorrefmark{2},
                          Zixuan Wang\IEEEauthorrefmark{1}, 
                          Yimin Wang\IEEEauthorrefmark{1}, and 
                          Xuanyao Fong\IEEEauthorrefmark{1}}
    \IEEEauthorblockA{\IEEEauthorrefmark{1}Department of Electrical and Computer Engineering, National University of Singapore, Singapore}
    \IEEEauthorblockA{\IEEEauthorrefmark{2}Department of Computer Science, Rice University, Houston, TX, USA}
}
\maketitle

\begin{abstract}
It is challenging to scale Ising machines for industrial-level problems due to algorithm or hardware limitations.
Although higher-order Ising models provide a more compact encoding, they are, however, hard to physically implement.
This work proposes a theoretical framework of a higher-order Ising simulator, \texttt{IsingSim}.
The Ising spins and gradients in \texttt{IsingSim} are decoupled and self-customizable. 
We significantly accelerate the simulation speed via a bidirectional approach for differentiating the hyperedge functions.
Our proof-of-concept implementation verifies the theoretical framework by simulating the Ising spins with exact and approximate gradients. 
Experiment results show that our novel framework can be a useful tool for providing design guidelines for higher-order Ising machines.

\end{abstract}

\begin{IEEEkeywords}
Boolean Analysis, Combinatorial Optimization
\end{IEEEkeywords}

\section{Introduction}
\ac{CSP} are fundamental in mathematics, physics, and computer science. 
The \ac{SAT} problem is a paradigmatic class of \ac{CSP}, where each variable takes values from the binary set \{\texttt{True, False}\}. 
Solving SAT efficiently is of utmost significance in computer science, both from a theoretical and a practical perspective~\cite{kyrillidis2020fouriersat}. 
Numerous problems in various domains are encoded and tackled by SAT solving, e.g., information theory~\cite{golia2022scalable}, VLSI design~\cite{wang2023fastpass}, and quantum computing~\cite{vardi2023solving}. 

A variety of \ac{SAT} problems can be formulated as Ising models~\cite{lucas2014ising}.
As the classical counterpart of quantum computers, 
Ising machines aim to find the minima of Hamiltonian efficiently~\cite{hauke2020perspectives}.
Most Ising machines can be categorized into two types: discrete or continuous.
%Discrete Ising machines can consist of up to millions of discrete Ising spins
A discrete Ising spin can only be spin-up or -down.
Discrete Ising machines are mostly based on simulated annealing, which selects and toggles an spin by heuristics~\cite{borders2019integer,aadit2022massively,si2024energy}.
A continuous Ising spin is represented by a continuous variable, \emph{e.g.}, the phases of oscillators.
Continuous Ising machines are mostly based on simulated bifurcation, which relaxes the Hamiltonian function to be continuous, with the global minima encodes the original solutions~\cite{marandi2014network,wang2021solving,wang2024design}.

Encoding combinatorial problems into Ising models often necessitates the use of auxiliary variables to capture the original problem's logic~\cite{lucas2014ising}.
The encoding often increases the hardness, as it leads to a higher dimensionality.
\begin{example}\label{eg:encode}
    Given a set of variables $x\in\{0,1\}^4$, and auxiliary variables $y\in\{0,1\}^3$. 
    Finding an assignment that satisfies the cardinality $x_1+x_2+x_3+x_4\ge{}2$ can be encoded into solving the minima of a discrete function
    $\min_{x,y} (y_2+y_3+y_4-1)^2+(2y_2+3y_3+4y_4-x_1-x_2-x_3-x_4)^2$.
\end{example}

\begin{figure}
    \centering
    \includegraphics[width=0.9\linewidth]{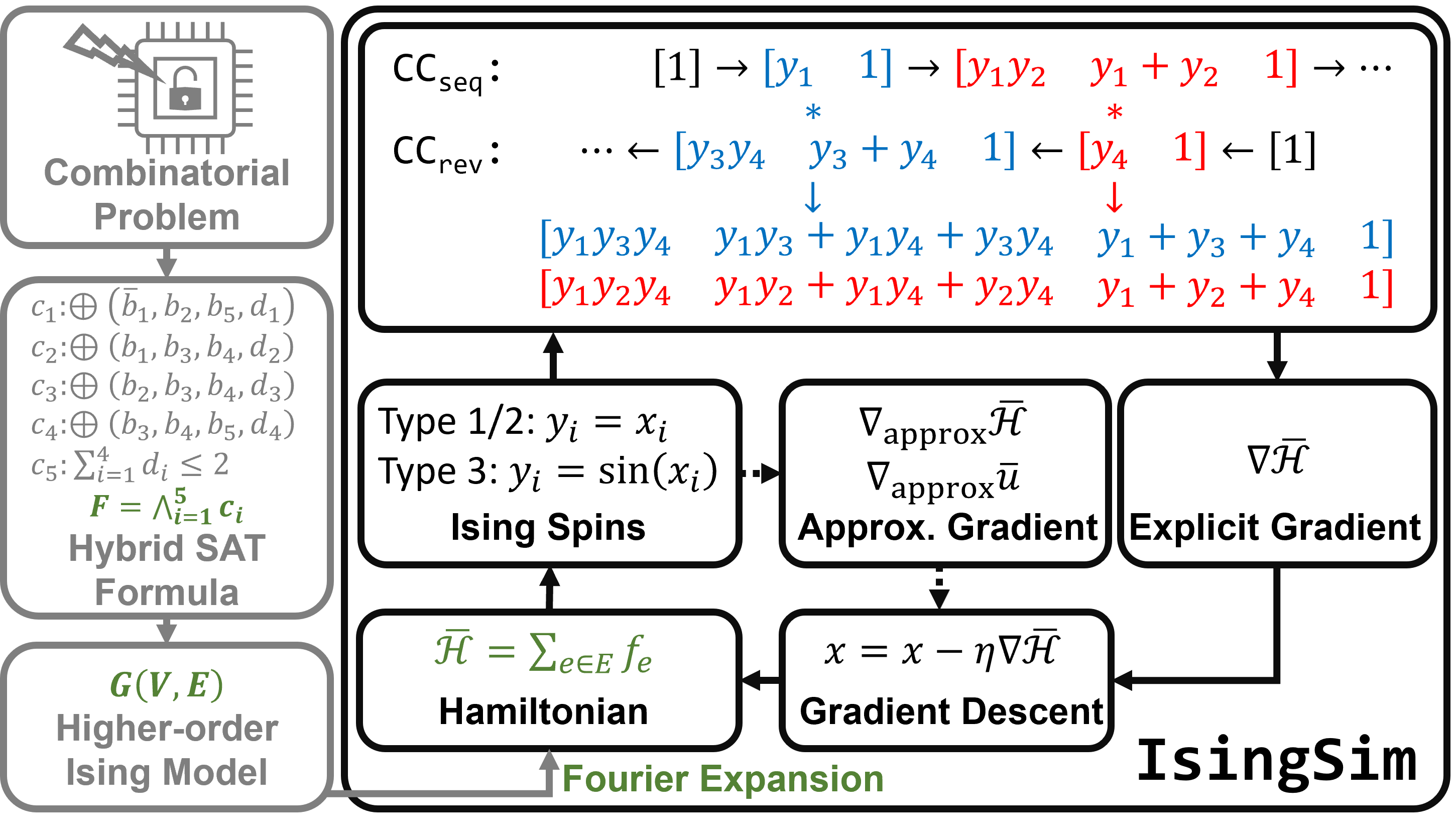}
    \caption{
    \textbf{Left}. Encoding a combinatorial problem, \emph{e.g.}, parity learning with error problem, into a higher-order Ising model.
    %which can be transformed by Fourier expansion.
    \textbf{Right}. The overall flow of \texttt{IsingSim}.
    }
    \label{fig:isingsim}
    \vspace{-0.5cm}
\end{figure}

%While continuous Ising machines are not scalable in size, discrete Ising machines are capable of running \ac{SA} at scale.
%However, the \ac{SA} algorithm might not be scalable.
%Introducing auxiliary variables is not expected. % for Ising machines.
To alleviate the dimensionality problem, the higher-order Ising model is proposed for compact encoding~\cite{bybee2023efficient, bhattacharya2024unified}. 
Discrete Ising machines mostly toggle a spin when the Hamiltonian function decreases, which works well for conventional Ising models where the edges are only coupling two spins.
For higher-order Ising models, however, toggling a spin might not change the value of the Hamiltonian function.
Consider the cardinality constraint in Example~\ref{eg:encode} and the Ising spins are all spin-down, where toggling any spin does not satisfy the constraint.
On the contrary, continuous Ising machines can make progress as long as the continuous function decreases.

% Comment: If continuous high-order IM is something new to understand in this research area, proposing a theoretical framework would naturally be the prime motivation/contribution, no need to come under the wrapper of a simulator for future hardware. I'm afraid the implementation challenge first mentioned here is not of utmost relevance. Also, the continuous high-order IM is not inspired by physics/hardware given the storyline here. 
Nevertheless, implementing continuous higher-order Ising machines is challenging, this is because
1) every hyperedge represents a many-body interaction of multiple spins, where the mainstream Ising machines lack of efficient implementation.
2) higher-order Ising model has not been well studied. 
Previous higher-order Ising machines focus on conjunctive normal form (CNF) SAT solving~\cite{bybee2023efficient, bhattacharya2024unified} while failing to generalize to other Boolean constraints.
Hence, we propose a higher-order Ising simulator, referred to as \texttt{IsingSim}, as in Fig.~\ref{fig:isingsim}, to study the behavior of higher-order Ising models. 
%The contributions are summarized as follows:

\noindent\textbf{Contributions}
We establish the theoretical framework for \texttt{IsingSim}.
We propose a bidirectional approach to differentiate the many-body hyperedge functions efficiently, which can be a guideline for physically implementing higher-order Ising machines.
Additionally, we present and compare three types of Ising spins and gradients in the experimental section, offering insights into the local geometry of the higher-order Ising Hamiltonians.
The Ising spins and gradients are decoupled from the framework, makeing them customizable. 
Our experiments on a real-life benchmark indicate that \texttt{IsingSim} can serve as a useful tool for future Ising machine design.

\section{Theoretical Framework}\label{sec:theory}
\subsection{Higher-Order Ising model}
Given an Ising model, every Ising spin represents a binary variable, and every edge maps two Ising spins $\{-1, 1\}^2$ to a binary value $\{-1, 1\}$. 
The Hamiltonian function is the sum of the edge function.
The Ising model natively encodes Max-2-XOR problems, \emph{i.e.}, $-1$ and $1$ denote Boolean \texttt{true} and \texttt{false}.
We define a higher-order Ising model ${G}(V, E)$ to generalize the Ising model for more general combinatorial optimization, \emph{e.g.}, hybrid SAT problem.
\begin{definition}[Higher-Order Ising Model and Hybrid SAT]\label{def:hoim}
    Let $x=(x_1,...,x_n)$ be a sequence of $n$ Boolean variables.
    A hybrid SAT formula is a conjunction of hybrid constraints, \emph{i.e.}, $F=\bigwedge_{c\in{}C}c$.
    Every hyperedge $e\in{}E$ has a Boolean function $f_e$ encoding a hybrid Boolean constraint $c$, and is mapping from $\{-1,1\}^{|e|}$ to $\{-1,1\}$.
    Then the Hamiltonian function is
    \begin{equation}\label{eq:hoim}
        {\mathcal{H}}(x) = \sum_{e\in{}E} w_e{}f_e(\{x_l: \forall{l}\in{}e\}),
    \end{equation}
    where $w_e$ is the weight of the hyperedge $e$ and $w_e>0$.
\end{definition}
\begin{lemma}[Reduction]\label{lmm:reduction}
    The Boolean formula $F=\bigwedge_{c\in{}C}c$ is satisfiable if and only if
    \begin{equation*}
        \min_{a\in\{-1,1\}^n} {\mathcal{H}}(a) = -\sum_{e\in{}E} w_e.
    \end{equation*}
\end{lemma}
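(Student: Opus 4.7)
The plan is to prove the two directions by first establishing a pointwise lower bound on $\mathcal{H}$ and then using the hypothesis $w_e>0$ to turn a global equality into coordinatewise equalities. The key observation is that, by Definition~\ref{def:hoim}, each $f_e$ takes values in $\{-1,1\}$, and by the convention introduced just before the definition ($-1$ denoting \texttt{true} and $1$ denoting \texttt{false}), $f_e(\{a_l: l\in e\})=-1$ precisely when the hybrid constraint encoded by $e$ is satisfied under assignment $a$.

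First I would record the uniform bound: for every $a\in\{-1,1\}^n$ and every $e\in E$, $w_e f_e(\{a_l: l\in e\})\ge -w_e$ because $w_e>0$ and $f_e\ge -1$. Summing over $e\in E$ gives $\mathcal{H}(a)\ge -\sum_{e\in E}w_e$ for all $a$, so the minimum over $\{-1,1\}^n$ is at least $-\sum_{e\in E}w_e$.

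Next I would handle the ``only if'' direction. Assume $F$ is satisfiable and let $a^\star$ be a satisfying assignment. Then every constraint $c$ holds, so $f_e(\{a^\star_l: l\in e\})=-1$ for every $e\in E$, which yields $\mathcal{H}(a^\star)=-\sum_{e\in E}w_e$. Combined with the lower bound, this forces $\min_a\mathcal{H}(a)=-\sum_{e\in E}w_e$.

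For the ``if'' direction, suppose $\min_a\mathcal{H}(a)=-\sum_{e\in E}w_e$ and let $a^\star$ achieve this minimum. Since every term satisfies $w_e f_e\ge -w_e$, the only way the sum can equal $-\sum_e w_e$ is if each term achieves its lower bound, i.e.\ $w_e f_e(\{a^\star_l: l\in e\})=-w_e$ for all $e$. Using $w_e>0$, this gives $f_e(\{a^\star_l: l\in e\})=-1$ for all $e$, so every constraint $c$ is satisfied and $F$ is satisfiable. There is no real obstacle here; the only subtlety worth stating carefully is the sign convention identifying the value $-1$ of $f_e$ with satisfaction of $c$, which is what lets the argument convert a global extremum into per-edge satisfaction.
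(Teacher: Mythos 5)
The paper states Lemma~\ref{lmm:reduction} without giving any proof, so there is nothing to compare against directly; your argument is the natural one the paper implicitly relies on, and it is correct. You correctly invoke the sign convention ($-1$ for \texttt{true}) stated just before Definition~\ref{def:hoim}, use $w_e>0$ and $f_e\in\{-1,1\}$ to get the termwise lower bound $w_e f_e\ge -w_e$, and observe that equality in the sum forces equality in every term, which is exactly the tightness argument needed for the ``if'' direction. No gaps.
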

\subsection{Walsh-Fourier Expansion} 
\ac{FE} is a multilinear polynomial representation of a Boolean hyperedge function $f_e$, such that the polynomial agrees with the Boolean function on all Boolean assignments~\cite{o2021analysis}. 

\begin{theorem}[Walsh-Fourier Expansion~\cite{o2021analysis}]\label{thm:FE}
    Given a function $f_e:\{\pm{}1\}^{|e|} \to \{-1,1\}$, there is a unique way of expressing $f_e$ as a multilinear polynomial, called the \ac{FE}, with at most $2^{|e|}$ terms in $S$ according to:
    \begin{equation*}
        f_e(x)=\sum_{S\subseteq[n]}\left(
            \hat{f}_e(S)\cdot\prod_{i\in{}S}x_{e_i}
        \right)
    \end{equation*}
    where $\hat{f}_e(S) \in \mathbb{R}^{2^{|e|}}$ is called Walsh-Fourier coefficient, given $S$, and computed as:
    \begin{equation}\label{eq:coefficient}
        \hat{f}_e(S)=\frac{1}{2^{|e|}}\sum_{x\in\{\pm{}1\}^{|e|}}\left(f_e(x)\cdot\prod_{i\in{}S}x_{e_i}\right)
    \end{equation}
\end{theorem}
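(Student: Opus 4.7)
The plan is to prove the Walsh-Fourier expansion by an orthonormal basis argument in the $2^{|e|}$-dimensional real vector space of functions $\{\pm 1\}^{|e|} \to \mathbb{R}$. First I would endow this space with the inner product
\begin{equation*}
\langle g, h \rangle \;=\; \frac{1}{2^{|e|}} \sum_{x \in \{\pm 1\}^{|e|}} g(x)\, h(x),
\end{equation*}
and for each $S \subseteq [|e|]$ define the character $\chi_S(x) = \prod_{i \in S} x_{e_i}$. These are $2^{|e|}$ multilinear monomials, one per subset, and the target object $f_e$ sits inside this ambient function space.

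Next I would verify orthonormality. Using $x_{e_i}^2 = 1$ on $\{\pm 1\}$,
\begin{equation*}
\langle \chi_S, \chi_T \rangle \;=\; \frac{1}{2^{|e|}} \sum_{x} \prod_{i \in S \triangle T} x_{e_i}.
\end{equation*}
When $S \ne T$, the symmetric difference is nonempty and the sum factorizes into a product over $i \in S \triangle T$ of factors $\sum_{x_{e_i} \in \{\pm 1\}} x_{e_i} = 0$, so the inner product vanishes; when $S = T$, the integrand is identically $1$, giving inner product $1$. Hence $\{\chi_S : S \subseteq [|e|]\}$ is an orthonormal system of cardinality $2^{|e|}$ in a space of dimension $2^{|e|}$, so it is an orthonormal basis.

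Existence and uniqueness of the multilinear representation then follow immediately: every $f_e$ admits a unique decomposition $f_e = \sum_{S \subseteq [|e|]} \hat{f}_e(S)\, \chi_S$, and pairing both sides with $\chi_S$ yields $\hat{f}_e(S) = \langle f_e, \chi_S \rangle$, which is precisely Eq.~\eqref{eq:coefficient}. Multilinearity of the representation is automatic because each basis element $\chi_S$ is itself multilinear.

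The only nontrivial step is the orthonormality computation, but it reduces to the elementary fact that independent uniform $\pm 1$ coordinates have mean zero and square to one; no subtlety arises from $f_e$ being $\{-1,1\}$-valued rather than real-valued, since the argument works uniformly for any real-valued function on the cube. Everything else is a dimension count plus the standard projection formula for orthonormal bases.
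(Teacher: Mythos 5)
Your proof is correct: the characters $\chi_S$ do form an orthonormal basis of the $2^{|e|}$-dimensional space of real-valued functions on the cube, and existence, uniqueness, and the coefficient formula in Eq.~\eqref{eq:coefficient} all follow from the projection identity $\hat{f}_e(S)=\langle f_e,\chi_S\rangle$ exactly as you argue. The paper itself gives no proof of this theorem --- it simply cites O'Donnell's \emph{Analysis of Boolean Functions} --- and your argument is essentially the standard one from that reference, so there is nothing to reconcile.
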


The following example shows that the cardinality constraint in Example~\ref{eg:encode} can be transformed by \ac{FE}.

\begin{example}\label{eg:expansion}
    Given a cardinality constraint $e:x_1+x_2+x_3+x_4\ge{}2$, its Walsh-Fourier expansion is
    \begin{align*}
        f_e(x) =& -\frac{3}{8}x_1x_2x_3x_4 
        -\frac{1}{8}(x_1x_2x_3+x_1x_2x_4+x_1x_3x_4\\
        & +x_2x_3x_4) + \frac{1}{8}(x_1x_2+x_1x_3+x_1x_4+x_2x_3 \\
        & +x_2x_4+x_3x_4) +\frac{3}{8}(x_1+x_2+x_3+x_4)-\frac{3}{8}.
    \end{align*}
    
\end{example}

\subsection{Ground States of Hamiltonian Functions}
Via \ac{FE}, the Ising Hamiltonian function can be transformed into a multilinear polynomial. 
Different Ising machine hardware relaxes the discrete domain $\{-1,1\}^n$ to the continuous domain using different methods.
The most straightforward method is to relax the domain to a hypercube $[-1,1]^n$.
Lemma~\ref{lmm:reduction} leads to the following theorem.

\begin{theorem}[Type I Relaxation~\cite{kyrillidis2020fouriersat,cen2023massively}]\label{thm:reduction}
    The Boolean formula $F=\bigwedge_{c\in{}C}c$ is satisfiable if and only if
    \begin{equation*}
        \min_{a\in[-1,1]^n} {\mathcal{H}}(a) = -\sum_{e\in{}E} w_e.
    \end{equation*}
\end{theorem}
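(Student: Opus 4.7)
My plan is to lift Theorem~\ref{thm:reduction} from the Boolean statement in Lemma~\ref{lmm:reduction} by showing that the continuous relaxation from $\{-1,1\}^n$ to $[-1,1]^n$ leaves the minimum of $\mathcal{H}$ unchanged. Theorem~\ref{thm:FE} guarantees that every edge function $f_e$ extends uniquely to a multilinear polynomial on $[-1,1]^{|e|}$, so $\mathcal{H}$ itself extends to a multilinear polynomial on $[-1,1]^n$. The central observation I would rely on is that any multilinear polynomial on $[-1,1]^n$ attains both its minimum and its maximum at some vertex of the Boolean hypercube $\{-1,1\}^n$.

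First I would establish this multilinear extremum lemma by induction on $n$. In the base case $n=1$ the polynomial is affine on $[-1,1]$, hence monotone, so its extrema lie at $\pm 1$. For the inductive step, let $a^\ast \in [-1,1]^n$ be a minimiser of the polynomial $p$. Freezing coordinates $x_2,\ldots,x_n$ at $a^\ast_2,\ldots,a^\ast_n$, the univariate map $x_1 \mapsto p(x_1, a^\ast_2,\ldots,a^\ast_n)$ is affine, and thus it is minimised on $[-1,1]$ at $x_1=1$ or $x_1=-1$; replace $a^\ast_1$ by that endpoint without increasing $p$. Iterating through every coordinate produces a vertex of $\{-1,1\}^n$ that attains the same minimum value.

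Applying the lemma to $\mathcal{H}$ I would conclude $\min_{a\in[-1,1]^n}\mathcal{H}(a) = \min_{a\in\{-1,1\}^n}\mathcal{H}(a)$, after which Lemma~\ref{lmm:reduction} gives both directions of the desired equivalence: the right-hand side equals $-\sum_{e\in E}w_e$ if and only if $F$ is satisfiable.

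I do not expect a substantial obstacle. The multilinearity argument is standard and the inequality $\mathcal{H}(a)\ge -\sum_{e\in E} w_e$ on $\{-1,1\}^n$ follows because $w_e>0$ and each $f_e$ takes values in $\{-1,1\}$. The only care required is notational, namely distinguishing the Boolean function $f_e$ from its Walsh-Fourier extension, and observing that multilinearity is preserved under weighted sums over hyperedges with possibly overlapping coordinate sets, both of which are immediate from Theorem~\ref{thm:FE}.
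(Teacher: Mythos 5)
Your proof is correct, but it takes a different route from the paper. The paper's argument is a one-line appeal to harmonicity: because $\mathcal{H}$ is multilinear it contains no $x_i^2$ terms, so its Laplacian vanishes, and the maximum principle for harmonic functions forces the extrema onto the boundary of $[-1,1]^n$ (implicitly iterated over faces to push them all the way to the vertices). You instead prove the same extremum-at-a-vertex fact by the elementary coordinate-wise rounding argument: a multilinear polynomial is affine in each variable with the others frozen, so any minimiser can be moved coordinate by coordinate to a point of $\{-1,1\}^n$ without increasing the value, after which Lemma~\ref{lmm:reduction} closes the equivalence. Both mechanisms exploit the same structural feature (no interior extrema for multilinear functions), but yours is self-contained and arguably more careful --- the paper's citation of the maximum principle only places the optima on the boundary, and silently relies on recursing into the faces to reach the vertices, a step your iteration makes explicit. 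What the paper's version buys is brevity and a connection to the PDE literature; what yours buys is an elementary, fully spelled-out argument that also makes the reduction to Lemma~\ref{lmm:reduction} explicit, including the lower bound $\mathcal{H}(a)\ge-\sum_{e\in E}w_e$ on the Boolean cube from $w_e>0$ and $f_e\in\{-1,1\}$.
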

\begin{proof}
    Since the Laplacian $\Delta{\mathcal{H}}(a)$ is $0$, the optima of ${\mathcal{H}}(a)$ are on the boundary by the maximum principle~\cite{protter2012maximum}.
\end{proof}

%Most continuous Ising machines use electronic oscillators, \emph{e.g.}, oscillator-based Ising machines~\cite{wang2021solving}, or optical oscillators, \emph{e.g.}, coherent Ising machines~\cite{marandi2014network}, as the Ising spins.
The above theorem can be generalized as follows.

\begin{corollary}[Type II Relaxation]\label{cor:type2}
    The Boolean formula $F=\bigwedge_{c\in{}C}c$ is satisfiable if and only if
    \begin{equation}\label{eq:type2}
        \min_{a\in\left[-\sqrt{p},\sqrt{p}\right]^n} \left({\mathcal{H}}(a)+\sum_{i=1}^n\left(a_i^4-2pa_i^2\right)\right) = -\sum_{e\in{}E} w_e - n,
    \end{equation}
    where the second term is called intrinsic locking in \ac{DOPO}, and $p>0$~\cite{marandi2014network}.
\end{corollary}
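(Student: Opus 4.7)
The plan is to reduce the corollary to Theorem~1 by showing that the quartic intrinsic-locking potential forces every coordinate to the boundary $\{-\sqrt{p},\sqrt{p}\}^{n}$ of the box; once that is established, the problem collapses to the discrete Ising minimization on $\{-1,1\}^{n}$ after the rescaling $a=\sqrt{p}\,s$, and Lemma~1 delivers the conclusion.

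The first step exploits the multilinearity of the Fourier-expanded Hamiltonian (Theorem~2). Fixing all coordinates but $a_i$, one can write $\mathcal{H}(a)=\alpha_i(a_{-i})+\beta_i(a_{-i})\,a_i$, so the full objective, viewed as a function of $a_i$ alone, reduces to a univariate quartic of the form $\psi(t)=c+\beta t+t^{4}-2pt^{2}$.

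The second step is the key univariate lemma: $\psi$ attains its minimum on $[-\sqrt{p},\sqrt{p}]$ only at the endpoints $\pm\sqrt{p}$. At any interior critical point $t^{\ast}$, the equation $\psi'(t^{\ast})=4t^{\ast 3}-4pt^{\ast}+\beta=0$ gives $\beta=4t^{\ast}(p-t^{\ast 2})$, which after substitution yields $\psi(t^{\ast})=c+2pt^{\ast 2}-3t^{\ast 4}$. Comparing with $\min\{\psi(\sqrt{p}),\psi(-\sqrt{p})\}=c-|\beta|\sqrt{p}-p^{2}$ and introducing $u=|t^{\ast}|/\sqrt{p}\in[0,1]$ reduces the desired inequality to
\begin{equation*}
    h(u):=-3u^{4}-4u^{3}+2u^{2}+4u+1\ge 0 \quad \text{on } [0,1].
\end{equation*}
Since $h'(u)=-4(u+1)(3u^{2}-1)$ has a single sign change on $[0,1]$ at $u=1/\sqrt{3}$ and the endpoint values are $h(0)=1$ and $h(1)=0$, the inequality is immediate. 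Iterating coordinate by coordinate forces any joint minimizer into the corner set $\{-\sqrt{p},\sqrt{p}\}^{n}$.

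The final step is the reduction itself: substituting $a=\sqrt{p}\,s$ with $s\in\{-1,1\}^{n}$ at the corners yields a constant contribution from the locking term that accounts for the $-n$ on the right-hand side, while $\mathcal{H}(\sqrt{p}\,s)$ reduces to the discrete Ising Hamiltonian $\mathcal{H}(s)$ to which Lemma~1 applies, translating $\min_{s}\mathcal{H}(s)=-\sum_{e\in E}w_e$ into the satisfiability of $F$. The main obstacle is step two: because $\psi''(t)=12t^{2}-4p$ changes sign inside the interval, $\psi$ is concave in the middle and convex only near the endpoints, so no direct convexity or harmonic-maximum-principle argument is available; the whole corollary ultimately hinges on the clean factorization of $h'$, which localizes the sole interior extremum and makes the scalar inequality transparent.
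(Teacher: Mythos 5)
Your argument is correct, but it takes a genuinely different (and considerably more hands-on) route than the paper's. The paper's proof is a two-line separability argument: the locking term $\sum_i\left(a_i^4-2pa_i^2\right)$ is minimized exactly on the corners $\{\pm\sqrt{p}\}^n$, the relaxed Hamiltonian $\mathcal{H}$ is already known from Theorem~\ref{thm:reduction} (via the maximum principle) to attain its box-minimum on the corners as well, and a sum of functions sharing a common minimizer attains the sum of the minima there; Lemma~\ref{lmm:reduction} then finishes. You never invoke the maximum principle or this common-minimizer trick. Instead you analyze the combined objective directly: multilinearity makes each coordinate restriction an affine-plus-quartic $\psi(t)=c+\beta t+t^4-2pt^2$, and the factorization $h'(u)=-4(u+1)(3u^2-1)$ shows every interior critical point lies weakly above the better endpoint, so minimizers can always be pushed to corners. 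I checked the algebra ($\beta=4t^{*}(p-t^{*2})$, $\psi(t^{*})=c+2pt^{*2}-3t^{*4}$, the reduction to $h\ge 0$ on $[0,1]$ with $h(0)=1$, $h(1)=0$): it is all right. Your version buys an elementary, self-contained localization of the minimizers of the \emph{combined} objective without using Theorem~\ref{thm:reduction} as a black box; the paper's version buys brevity by reusing the Type~I result.

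One caveat applies equally to your final step and to the corollary as stated: at a corner $a=\sqrt{p}\,s$ with $s\in\{-1,1\}^n$ the locking term contributes $-np^2$, not $-n$, and $\mathcal{H}(\sqrt{p}\,s)\ne\mathcal{H}(s)$ because each degree-$d$ Fourier monomial acquires a factor $p^{d/2}$. Hence the identity with right-hand side $-\sum_{e\in E}w_e-n$ holds only for $p=1$ (or after normalizing the spins by $\sqrt{p}$ inside $\mathcal{H}$ and the constant accordingly). Your substitution silently assumes this normalization; so does the paper's own proof, which even drops the $p$ from the quartic. This is a defect of the statement, not of your argument.
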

\begin{proof}
    The minima of $\left(a_i^4-2a_i^2\right)$ is on $\{\pm{}\sqrt{p}\}$.
    A function retains the minima when it is a linear combination of the functions with the same minima.
\end{proof}

\begin{corollary}[Type III Relaxation]\label{cor:type3}
    The Boolean formula $F=\bigwedge_{c\in{}C}c$ is satisfiable if and only if
    \begin{equation}\label{eq:type3}
        \min_{a\in\mathbb{R}^n} \left({\mathcal{H}}(\sin(a))+\sum_{i=1}^n\left(\cos(2a_i)\right)\right) = -\sum_{e\in{}E} w_e - n,
    \end{equation}
    where second term is called injection locking in \ac{OIM}~\cite{wang2021solving}.
\end{corollary}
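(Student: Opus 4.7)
The plan is to bound each summand below separately and show that simultaneous equality characterizes exactly the satisfying assignments of $F$, mirroring the proof strategy of Corollary 1. First I would note that $\cos(2a_i) \geq -1$ with equality iff $a_i \in \pi/2 + \pi\mathbb{Z}$, and this coincides precisely with the points where $\sin(a_i) \in \{-1, +1\}$. Summing over $i$ gives $\sum_{i=1}^n \cos(2a_i) \geq -n$, saturated iff $\sin(a) \in \{-1,1\}^n$.

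Next, since $\sin(a) \in [-1,1]^n$ for every $a \in \mathbb{R}^n$, I would establish the pointwise bound $\mathcal{H}(\sin(a)) \geq -\sum_{e\in E} w_e$ by noting that the Walsh-Fourier extension of each $f_e$ is multilinear, so its value at any point of $[-1,1]^{|e|}$ is a convex combination of its $\pm 1$ corner values; this forces $f_e \in [-1,1]$ on the hypercube and hence $\mathcal{H} = \sum_e w_e f_e \geq -\sum_e w_e$. By Lemma~\ref{lmm:reduction}, equality at a Boolean point is equivalent to that point satisfying $F$.

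Combining the two inequalities yields $\mathcal{H}(\sin(a)) + \sum_i \cos(2a_i) \geq -\sum_e w_e - n$ pointwise, so the minimum equals $-\sum_e w_e - n$ iff both summands simultaneously attain their lower bounds. For the forward direction, given a satisfying $x^* \in \{-1,1\}^n$, the choice $a^*_i \in \{-\pi/2, \pi/2\}$ with $\sin(a^*_i) = x^*_i$ is an explicit minimizer. For the converse I would argue the contrapositive: if $F$ is unsatisfiable, then by Lemma~\ref{lmm:reduction} no Boolean assignment yields $\mathcal{H} = -\sum_e w_e$, so every $a$ with $\sin(a) \in \{-1,1\}^n$ has $\mathcal{H}(\sin(a)) > -\sum_e w_e$, while any other $a$ has $\sum_j \cos(2a_j) > -n$; the total therefore exceeds $-\sum_e w_e - n$ everywhere, and since the objective is $2\pi$-periodic and continuous the infimum is attained on the compact torus $[0,2\pi]^n$ and strictly exceeds this value.

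The main subtlety is justifying the pointwise inequality $\mathcal{H}(a) \geq -\sum_e w_e$ on $[-1,1]^n$, which is strictly stronger than the ``minimum value'' statement of Theorem~\ref{thm:reduction}; the multilinear convex-combination argument resolves this cleanly, after which the proof reduces to matching equality conditions in two independent termwise bounds. Note also that the $\sin$ substitution makes the unbounded domain $\mathbb{R}^n$ harmless because periodicity keeps both summands bounded.
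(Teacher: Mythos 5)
Your proof is correct, and its skeleton matches the paper's: bound the two summands separately ($\mathcal{H}(\sin(a))\ge -\sum_e w_e$ and $\sum_i\cos(2a_i)\ge -n$) and observe that the equality sets intersect exactly at the points where $\sin(a)$ is a satisfying Boolean assignment. Where you diverge is in how the hypercube lower bound is justified: the paper leans on Theorem~\ref{thm:reduction}, whose own proof goes through the maximum principle for harmonic/multilinear functions, whereas you derive the pointwise bound $f_e(x)\in[-1,1]$ on $[-1,1]^{|e|}$ directly from the convex-combination identity $f_e(x)=\sum_{c\in\{\pm 1\}^{|e|}}f_e(c)\prod_i\frac{1+c_ix_i}{2}$. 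That substitution is a genuine improvement in self-containedness, and it gives you something the paper's citation alone does not cleanly deliver: a \emph{pointwise} inequality valid whether or not $F$ is satisfiable, which is exactly what the unsatisfiable direction needs. You are also more careful than the paper on two points it silently skips: (i) the converse requires showing the infimum over the noncompact domain $\mathbb{R}^n$ is actually attained (your periodicity/compact-torus argument handles this), and (ii) the case split between $\sin(a)\in\{\pm 1\}^n$ and $\sin(a)\notin\{\pm 1\}^n$ is needed to get a strict inequality everywhere. The paper's three-line proof (which also contains a typo, writing $\mathcal{H}(\cos(a))$ for $\mathcal{H}(\sin(a))$) asserts only where the minima lie without verifying the value is not attained elsewhere when $F$ is unsatisfiable; your version closes that gap.
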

\begin{proof}
    By Theorem~\ref{thm:reduction}, the minima of $\mathcal{H}(\cos(a))$ occur when $\sin(a)\in\{\pm{}1\}^n$.
    When $\cos(2a)=\{-1\}^n$, $\sin(a)\in\{\pm{}1\}^n$.
    Hence, the minima of the overall function are attained on $\{\left(n+\frac{1}{2}\right)\pi|n\in\mathbb{Z}\}^n$.
\end{proof}

\begin{remark}\label{rmk:optimality}
    Type I relaxation retains the multilinear properties, where the Hamiltonian function can be proven to have no local optima in the interior points~\cite{kyrillidis2020fouriersat}. 
    Type II and III relaxations are physics-inspired.
    %Typically, the second terms of the Eq.~\eqref{eq:type2} and~\eqref{eq:type3} is called injection locking from optical and electronic oscillators~\cite{marandi2014network,wang2021solving}, respectively.
    They give less consideration to the local geometry of the Hamiltonian function. 
    % Since most continuous optimizers rely on gradient descent, 
    The presence of inner local optima could slow down the optimization process.
\end{remark}

\subsection{Scalable Evaluation}  
%\textcolor{blue}{write convolution as def}
\begin{definition}[Convolution]
    The linear convolution of $g\in\mathbb{R}^n$ and $h\in\mathbb{R}^m$ is a sequence $(g*h)\in\mathbb{R}^{n+m-1}$.
    Each entry in $(g*h)$ is defined as:
    \begin{equation*}
        (g*h)_i=\sum_{j=0}^{i}g_{i-j}\cdot{}h_j.
    \end{equation*}
\end{definition}
From Example~\ref{eg:expansion} we can observe that, the coefficients of the \ac{FE} depend only on the order of the terms.
Hence Theorem~\ref{thm:FE} can be simplified to the following corollary using convolution.
\begin{corollary}[Symmetric]\label{cor:symmetric}
    Given a symmetric Boolean constraint, by leveraging the symmetric property, the \ac{FE} can be reduced to $|e|+1$ terms according to:
    \begin{equation}
        f_e(a)=\sum_{i\in\mathbb{N}_{\le{}|e|}}\left(
            \hat{f}_e(i)\cdot{}\left(\bigotimes_{j\in{}e}[a_j, 1]\right)_i
        \right),
    \end{equation}
    where $\bigotimes$ denotes the convolution of $|e|$ sequences.
    $\hat{f}_e(i)\in\mathbb{R}^{|e|+1}$ is computed by Eq.~\eqref{eq:coefficient} using any $S$ such that $|S|=i$.
\end{corollary}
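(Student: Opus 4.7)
The plan is to exploit the symmetry of $f_e$ to collapse the $2^{|e|}$ Walsh--Fourier coefficients from Theorem~\ref{thm:FE} down to $|e|+1$ numbers indexed by subset size, and then to recognize the surviving per-size monomial sums as entries of the convolution---equivalently, as coefficients of a product of degree-one polynomials.

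First I would show that when $f_e$ is symmetric, the coefficient $\hat{f}_e(S)$ from Eq.~\eqref{eq:coefficient} depends only on $|S|$. For any permutation $\pi$ of the indices in $e$, the change of summation variable $x\mapsto \pi x$ in Eq.~\eqref{eq:coefficient} leaves $f_e(x)$ invariant (by symmetry) and turns $\prod_{i\in S}x_{e_i}$ into $\prod_{i\in\pi(S)}x_{e_i}$, so $\hat{f}_e(S)=\hat{f}_e(\pi(S))$. Since any two size-$k$ subsets are related by a permutation, a common value $\hat{f}_e(k)$ is well-defined, which is exactly the $\hat{f}_e(i)$ appearing in the statement. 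Grouping the terms of Theorem~\ref{thm:FE} by $|S|$ then yields $f_e(a)=\sum_{k=0}^{|e|}\hat{f}_e(k)\,\sigma_k(\{a_{e_j}\}_{j\in e})$, where $\sigma_k$ denotes the $k$-th elementary symmetric polynomial.

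Next I would identify the generating sequence $(\sigma_k)_k$ with the convolution. Reading each $[a_j,1]$ as the coefficient list of a degree-one polynomial in a formal variable $z$, the convolution of the paper coincides with polynomial multiplication, so $\bigotimes_{j\in e}[a_j,1]$ is the coefficient sequence of $\prod_{j\in e}(a_j+z)$; Vieta's formulas identify those coefficients with the $\sigma_k$. Substitution into the regrouped expansion gives the stated identity, and the count drops from $2^{|e|}$ to $|e|+1$ terms as required. The main obstacle I anticipate is purely bookkeeping: the convolution-position index and the subset-size index run in opposite directions (position $0$ of $\bigotimes_{j\in e}[a_j,1]$ is $\prod_j a_{e_j}=\sigma_{|e|}$, while position $|e|$ is $\sigma_0=1$), so one must verify that the index $i$ on the convolution entry matches the subset-size index $i$ on $\hat{f}_e(i)$, possibly after a reflection $i\mapsto |e|-i$ in one of the two summations. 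Once that convention is pinned down, the substitution is immediate.
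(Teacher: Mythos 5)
Your argument is correct and is essentially the paper's own (the paper offers no formal proof, only the preceding observation that symmetry makes the coefficients depend solely on $|S|$, which is exactly the permutation-invariance plus grouping-by-size plus Vieta's-formula argument you spell out). Your bookkeeping worry is also well founded: under the paper's convolution convention, position $i$ of $\bigotimes_{j\in e}[a_j,1]$ is the elementary symmetric polynomial of degree $|e|-i$, so the identity as printed requires either the reflection $i\mapsto|e|-i$ you propose or writing the sequences as $[1,a_j]$ --- a genuine (if minor) defect in the statement that your proof correctly diagnoses and repairs.
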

%\begin{proof}
%    Given a symmetric $f_e$, $\hat{f}_e(S)$ depends only on $|S|$.
%\end{proof}

The above corollary implies that the complexity of evaluating a symmetric \ac{FE} can be reduced from $O(2^{|e|})$ to $O(|e|^2)$ using convolution.
Without loss of generality, it can be applied to different types of higher-order Ising machines as mentioned in Theorem~\ref{thm:reduction}, Corollary~\ref{cor:type2} and~\ref{cor:type3}.

\subsection{Gradient Computation}
Continuous optimizers often rely on gradient descent to find the minima of a non-convex but smooth function.
With the theoretic framework established, \texttt{IsingSim} uses both explicit gradient and estimated gradient to solve different Hamiltonian functions.

\subsubsection{Exact Gradient}
Computing the gradient requires computing the partial derivatives on all dimensions.
The following corollary implies that computing partial derivatives on one dimension is almost as expensive as the evaluation in Corollary~\ref{cor:symmetric}.
\begin{corollary}\label{cor:explicit_gradient}
    The partial derivative of $f_e$ w.r.t $x_j$ is:
    \begin{equation}
        \frac{\partial f_e(a)}{\partial x_j}=\sum_{i\in\mathbb{N}_{\le{}|e|-1}}\left(
            \hat{f}_e(i)\cdot{}\left(\bigotimes_{{e_k}\in{}e\backslash{}e_j}[a_{e_k}, 1]\right)_i
        \right).
    \end{equation}
\end{corollary}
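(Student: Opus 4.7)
The plan is to differentiate the symmetric expansion from Corollary~\ref{cor:symmetric} term by term and exploit the associativity of linear convolution to isolate the dependence on $a_j$. Since the $j$-th factor $[a_j, 1]$ appears exactly once in the product of length-$2$ sequences, I would rewrite
\begin{equation*}
    \bigotimes_{k \in e}[a_k, 1] \;=\; [a_j, 1] * Q, \qquad Q := \bigotimes_{k \in e \setminus \{j\}}[a_k, 1],
\end{equation*}
so that all $a_j$-dependence is concentrated in the single length-$2$ sequence $[a_j, 1]$, while $Q$ is independent of $a_j$ and has length $|e|$.

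I would then carry out the argument in three steps. First, I interchange $\partial/\partial x_j$ with the finite sum over $i$, which is immediate since each term is a multilinear polynomial. Second, I differentiate $[a_j, 1]$ entry-wise: one entry depends linearly on $a_j$ while the other is a constant, so the derivative is a length-$2$ sequence with exactly one nonzero entry. Third, using the bilinearity of convolution, convolving this derivative sequence with $Q$ amounts to convolving with a unit impulse, which reproduces $Q$ (up to the index alignment dictated by the convention for $[a_j, 1]$ used in Corollary~\ref{cor:symmetric}). Pairing the resulting entries with $\hat{f}_e(i)$ yields the stated formula.

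The main obstacle, and essentially the only subtle point, is bookkeeping the convolution index range. The full convolution $\bigotimes_{k\in e}[a_k,1]$ has length $|e|+1$, whereas the reduced convolution $Q$ has length $|e|$, so the sum in the derivative must terminate at $|e|-1$ rather than $|e|$. I would sanity-check this by verifying that the top-degree term $\prod_{k\in e}a_k$ of $f_e$ correctly contributes $\prod_{k\ne j}a_k$ to the derivative, which appears as precisely the highest-index entry of $Q$ inside the reduced sum, and that no term indexed beyond $|e|-1$ survives differentiation.
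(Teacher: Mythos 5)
Your argument is correct and is the natural derivation; the paper in fact states this corollary without any proof, and your route --- factoring $\bigotimes_{k\in e}[a_k,1]=[a_j,1]*Q$ with $Q$ independent of $a_j$, differentiating the single length-$2$ factor to get a unit impulse, and observing that convolving $Q$ with that impulse returns $Q$ --- is precisely the reasoning underlying the paper's own $\texttt{CC}_{seq}/\texttt{CC}_{rev}$ construction in the following subsection. Your index bookkeeping is also right: $Q$ has length $|e|$, and the one entry of the full convolution that survives to index $|e|$ is constant in $a_j$, so the sum correctly truncates at $|e|-1$.
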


Instead, \texttt{IsingSim} uses a more efficient way for computing the gradient, \emph{i.e.}, cumulative convolution.
When running convolution on multiple $\mathbb{R}^2$ sequences sequentially, \texttt{IsingSim} keeps track of the intermediate convolution result, which is called $\texttt{CC}_{seq}$.
\texttt{IsingSim} also runs another cumulative convolution with reverse order, which the result is called $\texttt{CC}_{rev}$.
Specifically $\texttt{CC}_{seq}[0]$ and $\texttt{CC}_{rev}[0]$ is $\emptyset$.
$\texttt{CC}_{seq}[|e|]$ and $\texttt{CC}_{rev}[|e|]$ is the final convolution result.
With the $\texttt{CC}$'s, the gradient can be obtained from the following equation.
\begin{equation*}
    \frac{\partial f_e(a)}{\partial a_{e_j}}=\sum_{i\in\mathbb{N}_{\le{}|e|-1}}\left(
        \hat{f}_e(i)\cdot{}\left(\texttt{CC}_{seq}[j-1]\ast\texttt{CC}_{rev}[|e|-j]\right)_i
    \right)
\end{equation*}
Hence, except for $\texttt{CC}_{seq}$ and $\texttt{CC}_{rev}$, the partial derivative on each dimension only requires one additional convolution, ensuring the performance of our simulator.  

\subsubsection{Estimated Gradient}
%The above bidirectional approach requires additional memory to handle the intermediate results, which is not expected in hardware algorithm design. Hence, 
\texttt{IsingSim} integrates gradient estimators.
A common approach is the two-point estimation.
\begin{equation*}
    %https://github.com/scipy/scipy/blob/v1.14.1/scipy/optimize/_optimize.py#L937-L1008
    \frac{\partial \bar{\mathcal{H}}(a)}{\partial a_i}\approx
    \frac{\bar{\mathcal{H}}\left(a+\delta\vec{i}\right)-\bar{\mathcal{H}}(a)}{\delta},
\end{equation*}
where $\vec{i}$ is a basis vector.

\subsubsection{Approximate Gradient of the Moreau Envelope}
As $\bar{\mathcal{H}}(a)$ is often non-convex, \texttt{IsingSim} integrates an estimator for the approximate gradient of the Moreau envelope.
\begin{proposition}[Gradient Estimation~\cite{osher2023hamilton}]
    The Moreau envelop of the Hamiltonian function is 
    \begin{equation*}
        u_t(a) = \min_{b\in\mathbb{R}^n} \bar{\mathcal{H}}(b)+\frac{1}{2t}||b-a||^2.
    \end{equation*}
    Given $\alpha,\delta>0$, the gradient of $u_t$ is estimated by
    \begin{equation*}
        \nabla u_{t\mid\alpha,\delta}(a) = \frac{1}{t}\left(
            a - \frac{\mathbb{E}_{b\sim\mathcal{N}(a, \frac{\delta{}t}{\alpha})}[b\cdot\exp(-\bar{\mathcal{H}}(b)/\delta)]}{\mathbb{E}_{b\sim\mathcal{N}(a, \frac{\delta{}t}{\alpha})}[\exp(-\bar{\mathcal{H}}(b)/\delta)]}
        \right)
    \end{equation*}

\begin{remark}\label{rmk:moreau}
    The practical implementation is to sample multiple points from $\mathcal{N}(a, \frac{\delta{}t}{\alpha})$ and apply the softmax on the function value to obtain the weight of each sample point. 
    The weighted combination of sample points forms the estimated proximal.
    %and can be used for the approximate gradient.
    %Achieving good performance on Ising machines requires a set of fine-tuned hyperparameters and a large number of sampling points. 
\end{remark}

\end{proposition}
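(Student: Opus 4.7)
My plan is to treat the statement as having two ingredients: the exact identity for $\nabla u_t$, and a Monte Carlo estimator of the proximal operator derived from a Gibbs (softmin) approximation. First I would recall the classical Moreau calculus: whenever the minimizer $\mathrm{prox}_{t\bar{\mathcal{H}}}(a)$ is unique, the envelope theorem yields
\begin{equation*}
\nabla u_t(a) \;=\; \tfrac{1}{t}\bigl(a - \mathrm{prox}_{t\bar{\mathcal{H}}}(a)\bigr),
\end{equation*}
obtained by differentiating $\bar{\mathcal{H}}(b^\star)+\tfrac{1}{2t}\|b^\star-a\|^2$ in $a$ at the optimal $b^\star$, with the $\nabla_b$ contribution vanishing by first-order optimality. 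This reduces the entire task to estimating $\mathrm{prox}_{t\bar{\mathcal{H}}}(a)$.

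Next I would replace the hard minimum by a softmin at temperature $\delta$, giving the Gibbs approximation
\begin{equation*}
\mathrm{prox}_{t\bar{\mathcal{H}}}(a) \;\approx\; \frac{\int b\,\exp\bigl(-[\bar{\mathcal{H}}(b)+\tfrac{1}{2t}\|b-a\|^2]/\delta\bigr)\,db}{\int \exp\bigl(-[\bar{\mathcal{H}}(b)+\tfrac{1}{2t}\|b-a\|^2]/\delta\bigr)\,db},
\end{equation*}
which is asymptotically sharp as $\delta\to 0^+$ by Laplace's method. The key algebraic step is to absorb the quadratic factor into the sampling distribution: $\exp(-\|b-a\|^2/(2\delta t))$ is proportional to the density of $\mathcal{N}(a,\delta t)$, so the quotient rewrites as a ratio of Gaussian expectations of $\exp(-\bar{\mathcal{H}}(b)/\delta)$ and $b\exp(-\bar{\mathcal{H}}(b)/\delta)$. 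The extra parameter $\alpha$ from \cite{osher2023hamilton} enters as a concentration/importance-sampling knob that sharpens the Gaussian to $\mathcal{N}(a,\delta t/\alpha)$, producing the estimator exactly as stated after dividing by $t$ and subtracting from $a/t$.

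The main obstacle I anticipate is making the softmin step quantitative when $\bar{\mathcal{H}}$ is non-convex and may exhibit several competitive low-energy basins: Laplace's method gives a clean leading-order term, but the lower-order corrections depend on Hessian information at each competing minimizer, and the mass split among basins must be controlled. A rigorous treatment must bound these corrections and propagate the resulting bias through the formula for $\mathrm{prox}_{t\bar{\mathcal{H}}}$; this is precisely the content of the viscosity Hamilton--Jacobi analysis in \cite{osher2023hamilton}. For this proposition I would therefore simply invoke its regularity hypotheses (smoothness and growth of $\bar{\mathcal{H}}$, smallness of $\delta$) and quote its concentration bound, rather than reproving the asymptotic from scratch.
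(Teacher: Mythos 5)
The paper does not actually prove this proposition: it is imported verbatim from \cite{osher2023hamilton}, and the only accompanying text is Remark~\ref{rmk:moreau}, which describes the sampling-plus-softmax implementation. Your proposal therefore does strictly more than the paper --- it reconstructs the derivation that the cited reference carries out: the envelope-theorem identity $\nabla u_t(a)=\tfrac{1}{t}\bigl(a-\mathrm{prox}_{t\bar{\mathcal{H}}}(a)\bigr)$, the Gibbs/softmin relaxation of the proximal, and the absorption of the quadratic penalty into a Gaussian sampling density (the Cole--Hopf step). That reconstruction is correct in outline, and your decision to quote the regularity and concentration hypotheses of \cite{osher2023hamilton} rather than redo the Laplace-method analysis is exactly what a proof here should do, since for the non-convex $\bar{\mathcal{H}}$ of this paper the hard minimum in $u_t$ need not be attained at a unique point and the multi-basin bias is the genuinely delicate issue.

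One point you gloss over: when $\alpha\neq 1$ the quadratic does \emph{not} cancel exactly against the $\mathcal{N}(a,\delta t/\alpha)$ density. The product of that density with $\exp(-\bar{\mathcal{H}}(b)/\delta)$ is proportional to $\exp\bigl(-[\bar{\mathcal{H}}(b)+\tfrac{\alpha}{2t}\|b-a\|^2]/\delta\bigr)$, so the ratio of expectations is the Gibbs approximation of $\mathrm{prox}_{(t/\alpha)\bar{\mathcal{H}}}(a)$, not of $\mathrm{prox}_{t\bar{\mathcal{H}}}(a)$; the displayed quantity is thus approximately $\alpha\,\nabla u_{t/\alpha}(a)$ rather than $\nabla u_t(a)$. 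Calling $\alpha$ an ``importance-sampling knob'' hides this reparametrization, which is precisely why the statement writes $\nabla u_{t\mid\alpha,\delta}$ as a separate, $\alpha$- and $\delta$-indexed object. Adding one sentence making that identification explicit would close the only real gap in your sketch.
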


\section{Implementations and Evaluations}\label{sec:result}

In this section, we design experiments to answer the following research questions.

\noindent\textbf{RQ1.} 
%As in Remark~1, there do not exist local minima in the Hamiltonian function for type I relaxation.
%Additionally, types 2 and 3 relaxations will lose this property.
By following the $\nabla\mathcal{H}$, $\tilde{\nabla}\mathcal{H}$, and $\tilde{\nabla}u$, do Ising spins converge to saddle point, local minima, or global minima?

\noindent\textbf{RQ2.}
How compact are higher-order Ising models as compared to traditional Ising models?

\noindent\textbf{RQ3.}
In higher-order Ising models encoded by practical problems, which Ising spin can achieve the best performance?

We implemented \texttt{IsingSim} using ADAM as the optimizer~\cite{kingma2014adam} with default parameter ($lr=0.05$, $\beta_1=0.9$, $\beta_2=0.999$) to implement $\nabla\bar{\mathcal{H}}$, $\tilde{\nabla}\bar{\mathcal{H}}$, and $\tilde{\nabla}u$ unless specifically stated.
Experiments are conducted on a cluster node with dual AMD EPYC 9654 CPUs and an NVIDIA H100 GPU.

\subsection{Ising Spin Trajectories}
For RQ1, we test different Ising spins and gradients on the Hamiltonian functions with a 2-XOR constraint.
The initial point is chosen so that, by following the gradient, the Ising spins will converge to a saddle point or a local minimum

\begin{figure}
    \centering
    \includegraphics[width=\linewidth]{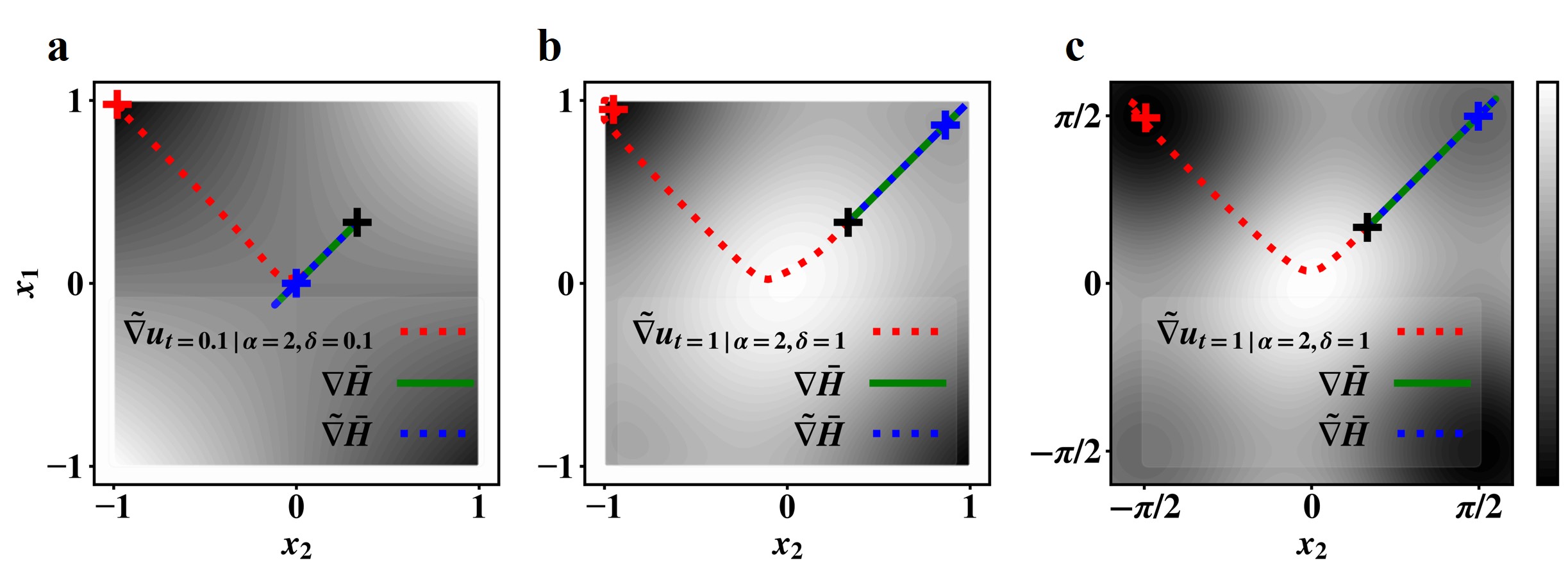}
    \caption{
    The gradient descent trajectories of two coupled \textbf{a}. type I, \textbf{b}. type II ($p=1$), and \textbf{c}. type III Ising spins under \texttt{XOR(x1,x2)}.
    Black crosses are the initial guesses.
    The crosses with other colors are the convergence points.
    %\textbf{Left}: The value of different Ising spins at each gradient step.
    %\textbf{Right}: The trajectories and the function values.
    %The upper, middle, and lower figures correspond to type I, II, and III Ising spins.
    }
    \label{fig:trajectory}
    \vspace{-0.5cm}
\end{figure}

\noindent\textbf{RQ1.} 
The Fourier expansion of a 2-XOR constraint is $f_\oplus = x_1{}x_2$.
The result is shown in Fig.~\ref{fig:trajectory}. 
Given a simple Hamiltonian function, $\nabla\bar{\mathcal{H}}$ and $\tilde{\nabla}\bar{\mathcal{H}}$ do not differ too much.
As mentioned in Remark~\ref{rmk:optimality}, all critical points in the type I Hamiltonian are saddle points.
It can be easily verified that $(0,0)$ is a saddle point in Fig.~\ref{fig:trajectory}~a.
If the initial point $(x_{1,0}, x_{2,0})$ such that $x_{1,0}=x_{2,0}$, then following $\nabla\bar{\mathcal{H}}$ or $\tilde{\nabla}\bar{\mathcal{H}}$ will always converge to the saddle point.
From Fig.~\ref{fig:trajectory}~b we can observe that, type II Ising spins at the same initial point will converge to a local minimum near $(1,1)$.
While type-II Ising spins might not converge to $\{\pm{}1\}^n$, the Hamiltonian function of type-III Ising spins always achieve its minima on $\left\{\pm{}\frac{\pi}{2}\right\}^n$.

While the convergence points of using $\nabla\mathcal{H}$ and $\tilde{\nabla}\mathcal{H}$ heavily depend on the initialization, $\tilde{\nabla}u$ direct the Ising spins to the global minima.
However, we can choose a smaller $t$ and $\delta$, \emph{i.e.}, a smaller sampling area, for estimating $\nabla{u}$.

\subsection{Solving Parity Learning with Error Problems}
The parity learning problem is to learn an unknown parity function given input-output samples.
% This problem can be solved by Gaussian elimination in P.
When the output is noisy, \emph{i.e.,} at most half of the output is inverted, whether the problem is in P remains an open question~\cite{crawford1994minimal}, and is widely used in cryptography~\cite{pietrzak2012cryptography}.
The noisy version is called \ac{PLE} problem.
Technically, \ac{PLE} aims to find an assignment that can violate at most $e\cdot m$ out of $m$ XOR constraints.
%While SAT solvers have dominated SAT solving for the past decade, \ac{PLE} is one of the problems that continuous optimizer can show competitive performance~\cite{cen2023massively}. 
For each $n \in \{8, 16, 32, 64\}$, \emph{i.e.}, the number of parity bits, we choose $e = 1/2$ and $m = 2n$ to generate 100 instances according to~\cite{hoos2000satlib} for RQ2 and RQ3.
%For example, the hybrid SAT formula in Fig.~\ref{fig:isingsim} encodes a \ac{PLE} problem with $m=4$ parity codes and allows at most $e\cdot{}m=1$ faulty.
%In this experiment, we use ADAM optimizer~\cite{kingma2014adam} to implement $\nabla\bar{\mathcal{H}}$ and $\tilde{\nabla}\bar{\mathcal{H}}$, and use Moreau adaptive descent~\cite{heaton2024global} to implement $\tilde{\nabla}u$.

\noindent\textbf{RQ2.}
The \ac{PLE} problem can be encoded into XOR constraints and 1 cardinality constraint~\cite{hoos2000satlib}, \emph{e.g.}, the hybrid SAT formula in Fig.~\ref{fig:isingsim} encodes 4 parity codes and allowing 1 fault.
Each hyperedge in higher-order Ising models directly encodes a Boolean constraint.
From Table~\ref{tab:encode} we can observe that, the size of higher-order Ising models in Definition~\ref{def:hoim} scales \emph{linearly}.

However, previous higher-order Ising machines only accept Boolean constraints that the \ac{FE} is in the form of multiplication, \emph{e.g.,} CNF~\cite{bybee2023efficient}, XOR~\cite{bhattacharya2024unified}.
We use Modulo Totalizer to encode the cardinality constraint into CNF~\cite{morgado2014mscg}.
If higher-order Ising models encodes CNF-XOR instead, the size scales \emph{polynomially}.
We encode second-order Ising models as in Example~\ref{eg:encode}~\cite{lucas2014ising}.
When $n=64$, $|V_2|$ is 12.33$\times$ larger than $|V_h|$, and $|E_2|$ is 874.97$\times$ larger than $|E_h|$.

\begin{table}[t]
\addtolength{\tabcolsep}{-5pt}
\centering
\caption{The size of Ising models which encode the parity learning with error problem.
$|V|$ is the number of spins and $|E|$ is the number of (hyper)edges.}
\begin{tabular}{>{\centering\arraybackslash}p{1.5cm} 
c >{\centering\arraybackslash}p{1cm} c >{\centering\arraybackslash}p{1cm} 
c >{\centering\arraybackslash}p{1cm} c >{\centering\arraybackslash}p{1cm} 
c >{\centering\arraybackslash}p{1cm} c >{\centering\arraybackslash}p{1.5cm}}
\toprule
\multirow{2}{*}{\parbox{1.5cm}{\centering Num. of \\ Parity Bits \\$n$}}
                  & & \multicolumn{3}{c}{HybridSAT} 
                  & & \multicolumn{3}{c}{CNF-XOR} 
                  & & \multicolumn{3}{c}{Max-2-XOR} \\
                \cmidrule{3-5}  \cmidrule{7-9} \cmidrule{11-13}
                  & & $|V_h|$      & & $|E_h|$     & & $|V_{cx}|$     & & $|E_{cx}|$     & & $|V_2|$     & & $|E_2|$  \\
\cmidrule{1-1}  \cmidrule{3-3}  \cmidrule{5-5}  \cmidrule{7-7}  \cmidrule{9-9} \cmidrule{11-11} \cmidrule{13-13}
8                 & & \textbf{24}   & & \textbf{17} & & 82          & & 181         & & 71.86       & & 668.33       \\
16                & & \textbf{48}   & & \textbf{33} & & 168         & & 371         & & 208.1       & & 3266.36  \\
32                & & \textbf{96}   & & \textbf{65} & & 337         & & 845         & & 672.13      & & 18052.38 \\
64                & & \textbf{192}  & & \textbf{129}& & 768         & & 2311        & & 2367.49     & & 112871.50\\
\bottomrule
\end{tabular}

\iffalse
\begin{tabular}{>{\centering\arraybackslash}p{1.5cm} 
c >{\centering\arraybackslash}p{1.5cm} c >{\centering\arraybackslash}p{1.5cm} 
c >{\centering\arraybackslash}p{1.5cm} c >{\centering\arraybackslash}p{1.5cm}}
\toprule
\multirow{2}{*}{\parbox{1.5cm}{\centering Number of \\ parity bits \\$n$}}
                  & & \multicolumn{3}{c}{Higher-order Ising model} & & \multicolumn{3}{c}{2-nd order Ising model} \\
                \cmidrule{3-5}  \cmidrule{7-9}
                  & & $|V_h|$        & & $|E_h|$       & & $|V_2|$       & & $|E_2|$  \\
\cmidrule{1-1}  \cmidrule{3-3}  \cmidrule{5-5}  \cmidrule{7-7}  \cmidrule{9-9}
8                 & & 24           & & 17          & & 71.86       & & 668.33       \\
16                & & 48           & & 33          & & 208.1       & & 3266.36      \\
32                & & 96           & & 65          & & 672.13      & & 18052.38     \\
64                & & 192          & & 129         & & 2367.49     & & 112871.50    \\
\bottomrule
\end{tabular}
\fi
\label{tab:encode}
\vspace{-0.5cm}
\end{table}

\noindent\textbf{RQ3.}
\ac{PLE} problems are encoded into higher-order Ising Hamiltonian $G(V,E)$.
We choose a weight $w_e=|e|$ for $e\in{}E$.
$\nabla\mathcal{H}$ and $\tilde{\nabla}\mathcal{H}$ are tested on three types of Ising spins for 500 gradient steps.
There are a total of 400 \ac{PLE} instances.
For each instance, we test with 100 trials with random initial points.
The success rate of 40000 trials is shown in Fig.~\ref{fig:ple}~a-b.
After 500 gradient steps, the \ac{IQR} and median of using different gradients are similar.
Instead, the gradient types are more impactful on the success rates.
The performance is ranked by Type I, II, then III. 
One intuition is from the weak-convexity, where for $\rho>0$, a function $f$ is $\rho$-weakly convex if $f(x)+\frac{\rho}{2}||x||^2$ is convex.
It can be verified from the Hessian $\nabla^2\bar{\mathcal{H}}+\rho{}I$.
When $\rho_1>\max_i\left(\sum_{e\ni{}i,j}\sum_{S}\left|\hat{f}_e(S)\right|\right)$, the above for Type I Hamiltonian is positive definite.
For Type II, $\rho_2\ge\rho_1+8$.
For Type III, $\rho_3>\max_i\left(\sum_{e\ni{}i}\sum_{S}\left|\hat{f}_e(S)\right| +  \sum_{e\ni{}i,j}\sum_{S}\left|\hat{f}_e(S)\right|\right)+4$.
For most instances, we have $\inf(\rho_1)<\inf(\rho_2)<\inf(\rho_3)$.

Since ADAM optimizer with $\tilde{\nabla}u$, as gradient converges slower, we choose $lr=1$.
Recall from Remark~\ref{rmk:moreau} that a larger size problem requires more samples to estimate ${\nabla}u$.
We implement $\tilde{\nabla}u$ on the smallest instances with 1000 samples per gradient estimation.
The best practice for $\tilde{\nabla}u$ is to fine-tune hyperparameters with adaptive parameters for specific problems~\cite{heaton2024global}. 
Implementing $\tilde{\nabla}u$ on high-dimension problems is less practical, where the engineering efforts might be larger than solving the problems themselves. 
Here we choose $\alpha,\delta,t=1$.
After 10000 gradient steps, the success rates are lower than using $\nabla\mathcal{H}$ and $\tilde{\nabla}\mathcal{H}$ for 500 steps.
Typically, the success rate of using Type III spins is close to 0.
We can get insights from the above analysis of weak convexity, where a larger $\rho$ will make it harder to sample outside a local minima.

\begin{figure}
    \centering
    \includegraphics[width=\linewidth]{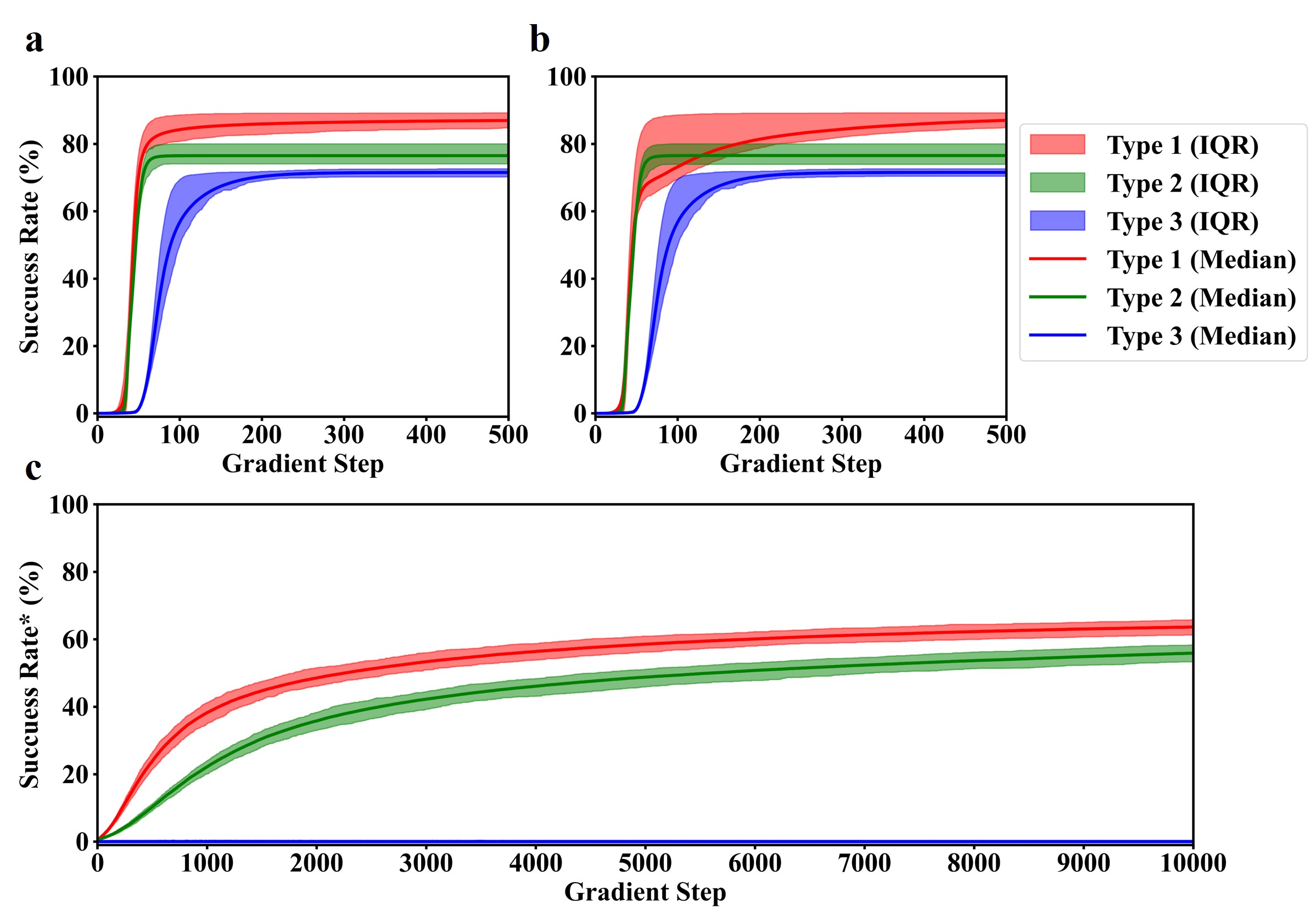}
    \caption{
    Success rates at each gradient step using \textbf{a}. $\nabla\bar{H}$, \textbf{b}. $\tilde{\nabla}\bar{H}$, and \textbf{c}. $\tilde{\nabla}u$.
    * denotes the success rates on \ac{PLE} problems with 8 parity bits.
    }
    \label{fig:ple}
    \vspace{-0.5cm}
\end{figure}

\section{Conclusion}\label{sec:conclusion}
This work proposes \texttt{IsingSim}, a customizable framework for analyzing the convergence of higher-order Ising Hamiltonian.
Corollary~\ref{cor:symmetric} and~\ref{cor:explicit_gradient} ensure the scalability for higher-order Ising machines.
Three types of Ising spins are studied in the experiments.
Results on \ac{PLE} problems show that the performance of using Type I Ising spins is higher than II or III.
The insights of the above observation are provided from the angle of weak convexity. 
Additionally, we show that the approximate gradient of the Moreau envelope $\tilde{\nabla}u$ can direct the Ising spins to global minima, and however, is less practical on higher dimension problems.
The encouraging results imply that $\nabla\mathcal{H}$ and $\tilde{\nabla}\mathcal{H}$ can be candidate methods for implementing higher-order Ising machines. 
They also highlight that \texttt{IsingSim} can be a useful tool for exploring new designs.
%\noindent\textbf{Limitations and Future Works.}
%\texttt{IsingSim} currently does not account for parameters related to hardware implementations, which we plan to integrate in future versions.
%We are also interested in exploring hyperparameter fine-tuning approaches for assisting the design space exploration of Ising machines.

\bibliographystyle{IEEEtran}
\bibliography{reference}

% Generated by IEEEtran.bst, version: 1.14 (2015/08/26)
\begin{thebibliography}{10}
\providecommand{\url}[1]{#1}
\csname url@samestyle\endcsname
\providecommand{\newblock}{\relax}
\providecommand{\bibinfo}[2]{#2}
\providecommand{\BIBentrySTDinterwordspacing}{\spaceskip=0pt\relax}
\providecommand{\BIBentryALTinterwordstretchfactor}{4}
\providecommand{\BIBentryALTinterwordspacing}{\spaceskip=\fontdimen2\font plus
\BIBentryALTinterwordstretchfactor\fontdimen3\font minus \fontdimen4\font\relax}
\providecommand{\BIBforeignlanguage}[2]{{%
\expandafter\ifx\csname l@#1\endcsname\relax
\typeout{** WARNING: IEEEtran.bst: No hyphenation pattern has been}%
\typeout{** loaded for the language `#1'. Using the pattern for}%
\typeout{** the default language instead.}%
\else
\language=\csname l@#1\endcsname
\fi
#2}}
\providecommand{\BIBdecl}{\relax}
\BIBdecl

\bibitem{kyrillidis2020fouriersat}
A.~Kyrillidis, A.~Shrivastava, M.~Vardi, and Z.~Zhang, ``Fouriersat: A fourier expansion-based algebraic framework for solving hybrid boolean constraints,'' in \emph{Proceedings of the AAAI Conference on Artificial Intelligence}, vol.~34, no.~02, 2020, pp. 1552--1560.

\bibitem{golia2022scalable}
P.~Golia, B.~Juba, and K.~S. Meel, ``A scalable shannon entropy estimator,'' in \emph{International Conference on Computer Aided Verification}.\hskip 1em plus 0.5em minus 0.4em\relax Springer, 2022, pp. 363--384.

\bibitem{wang2023fastpass}
F.~Wang, J.~Liu, and E.~F. Young, ``Fastpass: Fast pin access analysis with incremental sat solving,'' in \emph{Proceedings of the 2023 International Symposium on Physical Design}, 2023, pp. 9--16.

\bibitem{vardi2023solving}
M.~Y. Vardi and Z.~Zhang, ``Solving quantum-inspired perfect matching problems via tutte's theorem-based hybrid boolean constraints,'' \emph{arXiv preprint arXiv:2301.09833}, 2023.

\bibitem{lucas2014ising}
A.~Lucas, ``Ising formulations of many np problems,'' \emph{Frontiers in physics}, vol.~2, p.~5, 2014.

\bibitem{hauke2020perspectives}
P.~Hauke, H.~G. Katzgraber, W.~Lechner, H.~Nishimori, and W.~D. Oliver, ``Perspectives of quantum annealing: Methods and implementations,'' \emph{Reports on Progress in Physics}, vol.~83, no.~5, p. 054401, 2020.

\bibitem{borders2019integer}
W.~A. Borders, A.~Z. Pervaiz, S.~Fukami, K.~Y. Camsari, H.~Ohno, and S.~Datta, ``Integer factorization using stochastic magnetic tunnel junctions,'' \emph{Nature}, vol. 573, no. 7774, pp. 390--393, 2019.

\bibitem{aadit2022massively}
N.~A. Aadit, A.~Grimaldi, M.~Carpentieri, L.~Theogarajan, J.~M. Martinis, G.~Finocchio, and K.~Y. Camsari, ``Massively parallel probabilistic computing with sparse ising machines,'' \emph{Nature Electronics}, vol.~5, no.~7, pp. 460--468, 2022.

\bibitem{si2024energy}
J.~Si, S.~Yang, Y.~Cen, J.~Chen, Y.~Huang, Z.~Yao, D.-J. Kim, K.~Cai, J.~Yoo, X.~Fong \emph{et~al.}, ``Energy-efficient superparamagnetic ising machine and its application to traveling salesman problems,'' \emph{Nature Communications}, vol.~15, no.~1, p. 3457, 2024.

\bibitem{marandi2014network}
A.~Marandi, Z.~Wang, K.~Takata, R.~L. Byer, and Y.~Yamamoto, ``Network of time-multiplexed optical parametric oscillators as a coherent ising machine,'' \emph{Nature Photonics}, vol.~8, no.~12, pp. 937--942, 2014.

\bibitem{wang2021solving}
T.~Wang, L.~Wu, P.~Nobel, and J.~Roychowdhury, ``Solving combinatorial optimisation problems using oscillator based ising machines,'' \emph{Natural Computing}, vol.~20, no.~2, pp. 287--306, 2021.

\bibitem{wang2024design}
Y.~Wang, Y.~Cen, and X.~Fong, ``Design framework for ising machines with bistable latch-based spins and all-to-all resistive coupling,'' in \emph{2024 IEEE International Symposium on Circuits and Systems (ISCAS)}.\hskip 1em plus 0.5em minus 0.4em\relax IEEE, 2024, pp. 1--5.

\bibitem{bybee2023efficient}
C.~Bybee, D.~Kleyko, D.~E. Nikonov, A.~Khosrowshahi, B.~A. Olshausen, and F.~T. Sommer, ``Efficient optimization with higher-order ising machines,'' \emph{Nature Communications}, vol.~14, no.~1, p. 6033, 2023.

\bibitem{bhattacharya2024unified}
T.~Bhattacharya, G.~Hutchinson, and D.~B. Strukov, ``Unified framework for efficient high-order ising machine hardware implementations,'' \emph{preprint, avaiable at: https://web. ece. ucsb. edu/\~{} strukov/papers/2024/IM2024. pdf}.

\bibitem{o2021analysis}
R.~O'Donnell, ``Analysis of boolean functions,'' \emph{arXiv preprint arXiv:2105.10386}, 2021.

\bibitem{cen2023massively}
Y.~Cen, Z.~Zhang, and X.~Fong, ``Massively parallel continuous local search for hybrid sat solving on gpus,'' \emph{arXiv preprint arXiv:2308.15020}, 2023.

\bibitem{protter2012maximum}
M.~H. Protter and H.~F. Weinberger, \emph{Maximum principles in differential equations}.\hskip 1em plus 0.5em minus 0.4em\relax Springer Science \& Business Media, 2012.

\bibitem{osher2023hamilton}
S.~Osher, H.~Heaton, and S.~Wu~Fung, ``A hamilton--jacobi-based proximal operator,'' \emph{Proceedings of the National Academy of Sciences}, vol. 120, no.~14, p. e2220469120, 2023.

\bibitem{kingma2014adam}
D.~P. Kingma, ``Adam: A method for stochastic optimization,'' \emph{arXiv preprint arXiv:1412.6980}, 2014.

\bibitem{crawford1994minimal}
J.~M. Crawford, M.~J. Kearns, and R.~E. Schapire, ``The minimal disagreement parity problem as a hard satisfiability problem,'' \emph{Computational Intell. Research Lab and AT\&T Bell Labs TR}, 1994.

\bibitem{pietrzak2012cryptography}
K.~Pietrzak, ``Cryptography from learning parity with noise,'' in \emph{International Conference on Current Trends in Theory and Practice of Computer Science}.\hskip 1em plus 0.5em minus 0.4em\relax Springer, 2012, pp. 99--114.

\bibitem{hoos2000satlib}
H.~H. Hoos and T.~St{\"u}tzle, ``Satlib: An online resource for research on sat,'' \emph{Sat}, vol. 2000, pp. 283--292, 2000.

\bibitem{morgado2014mscg}
A.~Morgado, A.~Ignatiev, and J.~Marques-Silva, ``Mscg: Robust core-guided maxsat solving,'' \emph{Journal on Satisfiability, Boolean Modeling and Computation}, vol.~9, no.~1, pp. 129--134, 2014.

\bibitem{heaton2024global}
H.~Heaton, S.~Wu~Fung, and S.~Osher, ``Global solutions to nonconvex problems by evolution of hamilton-jacobi pdes,'' \emph{Communications on Applied Mathematics and Computation}, vol.~6, no.~2, pp. 790--810, 2024.

\end{thebibliography}

\end{document}